\newtheorem{theorem}{Theorem}[section]
\newtheorem{lemma}[theorem]{Lemma}
\newcommand{\hh}{\hspace{0.7cm}}
\newcommand{\ExactLPTML}{\ensuremath{\mathsf{Exact\text{-}LPTML}}}
\newcommand{\ITML}{\ensuremath{\mathsf{ITML}}}
\newcommand{\LMNN}{\ensuremath{\mathsf{LMNN}}}
\newcommand{\LPTML}{\ensuremath{\mathsf{LPTML}}}
\newcommand{\BasicLPTML}{\ensuremath{\mathsf{Basic\text{-}LPTML}}}
\newcommand{\eps}{\varepsilon}
\newcommand{\XXX}{\textcolor{red}{XXX}}
\newcommand{\nil}{\mathsf{nil}}
\newcommand{\cost}{\mathsf{cost}}
\newcommand{\bA}{\mathbf{A}}
\newcommand{\bC}{\mathbf{C}}
\newcommand{\bI}{\mathbf{I}}
\newcommand{\bG}{\mathbf{G}}
\newcommand{\bv}{\mathbf{v}}
\newcommand{\reg}{\text{reg}}
\newcommand{\tr}{\text{tr}}
\title{Robust Mahalanobis Metric Learning via Geometric Approximation Algorithms}
\author{
  Diego Ihara Centurion \thanks{Authors sorted in alphabetical order.}\\
  Department of Computer Science\\
  University of Illinois at Chicago\\
  Chicago, IL 60607\\
  \texttt{dihara2@uic.edu}\\
  \And
  Neshat Mohammadi \footnotemark[1]\\
  Department of Computer Science\\
  University of Illinois at Chicago\\
  Chicago, IL 60607\\
  \texttt{nmoham24@uic.edu}\\
  \And
  Francesco Sgherzi \footnotemark[1]\\
  Department of Computer Science\\
  University of Illinois at Chicago\\
  Chicago, IL 60607\\
  \texttt{fsgher2@uic.edu}\\
  \And
  Anastasios Sidiropoulos \footnotemark[1]\\
  Department of Computer Science\\
  University of Illinois at Chicago\\
  Chicago, IL 60607\\
  \texttt{sidiropo@uic.edu}\\
}
\begin{document}
\maketitle
\begin{abstract}
Learning Mahalanobis metric spaces is an important problem that has found numerous applications.
Several algorithms have been designed for this problem, including Information Theoretic Metric Learning (\ITML) [Davis et al.~2007] and Large Margin Nearest Neighbor (\LMNN) classification [Weinberger and Saul~2009].
We study the problem of learning a Mahalanobis metric space in the presence of adversarial label noise.
To that end, we consider a formulation of Mahalanobis metric learning as an optimization problem, where the objective is to minimize the number of violated similarity/dissimilarity constraints.
We show that for any fixed ambient dimension, there exists a fully polynomial-time approximation scheme (FPTAS) with nearly-linear running time. 
This result is obtained using tools from the theory of linear programming in low dimensions.
As a consequence, we obtain a fully-parallelizable algorithm that recovers a nearly-optimal metric space, even when a small fraction of the labels is corrupted adversarially.
We also discuss improvements of the algorithm in practice, and present experimental results on  real-world, synthetic, and poisoned data sets. 
\end{abstract}

\section{Introduction}

Learning metric spaces is a fundamental computational primitive that has found numerous applications and has received significant attention in the literature.
We refer the reader to \cite{kulis2013metric,li2018survey} for detailed exposition and discussion of previous work.
At the high level, the input to a metric learning problem consists of some universe of objects $X$, together with some similarity information on subsets of these objects.
Here, we focus on pairwise similarity and dissimilarity constraints.
Specifically, we are given ${\cal S}, {\cal D}\subset{X\choose 2}$, which are sets of pairs of objects that are labeled as similar and dissimilar respectively.
We are also given some $u,\ell>0$, and we seek to find a mapping $f:X\to Y$, into some target metric space $(Y,\rho)$, such that for all $x,y\in {\cal S}$,
\[
\rho(f(x),f(y)) \leq u,
\]
and for all $x,y\in {\cal D}$,
\[
\rho(f(x),f(y)) \geq \ell.
\]

In the case of Mahalanobis metric learning, we have  $X\subset \mathbb{R}^d$, with $|X|=n$, for some $d\in \mathbb{N}$, and the mapping $f:\mathbb{R}^d\to \mathbb{R}^d$ is linear.
Specifically, we seek to find a matrix $\bG\in \mathbb{R}^{d\times d}$, such that for all $\{p, q\}\in {\cal S}$, we have
\begin{align}
\|\bG  p - \bG  q\|_2 &\leq u, \label{eq:G_u}
\end{align}
and
for all $\{p, q\}\in {\cal D}$, we have
\begin{align}
\|\bG p - \bG  q\|_2 &\geq \ell. \label{eq:G_l}
\end{align}

\subsection{Our Contribution}

In general, there might not exist any $\bG$ that satisfies all constraints of type \ref{eq:G_u} and \ref{eq:G_l}.
We are thus interested in finding a solution that minimizes the fraction of violated constraints, which corresponds to maximizing the accuracy of the mapping.
We develop a $(1+\eps)$-approximation algorithm for optimization problem of computing a Mahalanobis metric space of maximum accuracy, that runs in near-linear time for any fixed ambient dimension $d\in \mathbb{N}$.
This algorithm is obtained using tools from geometric approximation algorithms and the theory of linear programming in small dimension.
The following summarizes our result.

\begin{theorem}\label{thm:main}
For any $d\in \mathbb{N}$, $\eps>0$,
there exists a randomized algorithm for learning $d$-dimensional Mahalanobis metric spaces, which given an instance that admits a mapping with accuracy $r^*$, computes a mapping with accuracy at least $r^*-\eps$,
in time 
$d^{O(1)} n (\log{n}/\eps)^{O(d)}$,
with high probability.
\end{theorem}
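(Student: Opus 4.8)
The plan is to recast the problem as a purely combinatorial-geometric question about covering a single point by halfspaces, to solve that question exactly through an arrangement / LP-type enumeration, and then to replace the enumeration by random sampling in order to obtain near-linear running time at the cost of an additive $\eps$ loss in accuracy.

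First I would \emph{linearize} the constraints ($\AlgEmbed$). Writing $\mathbf{M} = \bG^{\top}\bG$, which ranges over the cone $\mathcal{K}$ of symmetric positive semidefinite matrices, each constraint of type \eqref{eq:G_u} or \eqref{eq:G_l} becomes linear in $\mathbf{M}$: for a pair $\{p,q\}$ with difference $v = p-q$ one has $\|\bG p - \bG q\|_2^2 = v^{\top}\mathbf{M} v = \langle \mathbf{M}, vv^{\top}\rangle$, so the similarity constraint reads $\langle \mathbf{M}, vv^{\top}\rangle \le u^2$ and the dissimilarity constraint reads $\langle \mathbf{M}, vv^{\top}\rangle \ge \ell^2$. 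Identifying symmetric matrices with vectors in $\mathbb{R}^{D}$, $D = \binom{d+1}{2}$, every labeled pair maps to a halfspace $H_i \subseteq \mathbb{R}^{D}$, and maximizing accuracy becomes exactly the task of finding a point $\mathbf{M} \in \mathcal{K}$ lying in the maximum number of the $N = |\mathcal{S}|+|\mathcal{D}|$ halfspaces. Given an optimal such $\mathbf{M}$, a valid factor $\bG$ is recovered by a matrix square root, so it suffices to solve this max-coverage-by-a-point problem over the PSD cone.

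Next I would give an exact (but slow) algorithm, $\ExactLPTML$. The coverage function is constant on each cell of the arrangement of the $N$ bounding hyperplanes $\partial H_i$ restricted to $\mathcal{K}$, so the optimum is attained on some cell, and each relevant cell is pinned down by a bounded \emph{basis} of tight constraints. Enumerating candidate bases and, for each, invoking a convex feasibility solver $\LPT$ that either returns a representative point of the cell inside $\mathcal{K}$ or certifies emptiness, then counting satisfied constraints, produces the exact optimum. Its cost is exponential in the dimension $D$. To reach near-linear time I would then build $\FastLPTML$ by replacing exhaustive enumeration with random sampling: using the theory of $\eps$-approximations for range spaces of bounded VC dimension, a sample of size $m = \mathrm{poly}(d,\log n, 1/\eps)$ makes the empirical fraction of satisfied constraints approximate the true fraction to within $\eps$ \emph{uniformly} over all candidate points, with high probability. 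I would enumerate only the bases supported on the sample, solve the corresponding $\LPT$ feasibility inside $\mathcal{K}$, and evaluate each resulting point against all $N$ constraints in $O(N d^{O(1)})$ time, keeping the best; uniform convergence then guarantees accuracy at least $r^{*}-\eps$, and the total cost is (number of candidate bases) $\times$ (per-candidate evaluation), which is of the form $d^{O(1)} n (\log n/\eps)^{O(d)}$. Since the candidate evaluations are independent, the procedure parallelizes.

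The step I expect to be the main obstacle is the sampling analysis, and in particular coupling it to the \emph{right} notion of dimension. I must (i) identify a range space whose $\eps$-approximation certifies additive-$\eps$ accuracy simultaneously for all polynomially many candidate cells, and bound its VC dimension; (ii) argue that a near-optimal cell is always captured by a basis drawn from the sample; and (iii) control the exponent. The naive count, treating the $H_i$ as generic halfspaces in $\mathbb{R}^{D}$ with $D = \binom{d+1}{2} = \Theta(d^2)$, only yields exponent $O(d^2)$; obtaining the claimed $O(d)$ is the structural heart of the result, and I would aim to establish it by exploiting that every constraint normal $vv^{\top}$ is a rank-one lift of a $d$-dimensional vector, so the relevant combinatorial dimension is governed by the ambient dimension $d$ rather than by the number of matrix entries. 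A secondary but necessary point is that $\LPT$ must respect the convex constraint $\mathbf{M}\in\mathcal{K}$, so that the recovered matrix is genuinely a Gram matrix and hence yields a legitimate Mahalanobis metric.
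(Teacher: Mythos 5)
Your linearization is exactly the paper's: both pass to $\bA=\bG^{\top}\bG$, view each pair constraint as a halfspace in the $O(d^2)$-dimensional space of symmetric matrices intersected with the PSD cone, and reduce to finding a point of the cone covered by the maximum number of halfspaces. Where you diverge is in the machinery that turns this into a near-linear-time algorithm. The paper casts the problem as an LP-type problem: it introduces a \emph{random linear objective} $r^{\top}\bA r$ (with $r$ uniform on the sphere) solely to make optima unique so that the locality axiom (A2) holds with high probability, bounds the combinatorial dimension by $O(d^2)$ via the fact that an ellipsoid in $\mathbb{R}^d$ is determined by $(d+3)d/2$ boundary points, implements the basic operations as $O(d^2)$-sized SDPs, and then invokes an off-the-shelf theorem (Har-Peled, Ch.~15) for LP-type problems with violations, whose algorithm subsamples at geometrically decreasing rates $p=(1+\eps)^{-i}$ and runs a Welzl-style recursive exact solver on each subsample. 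You instead propose a single $\eps$-approximation for the dual range space of halfspaces, enumeration of bases supported on the sample, and re-evaluation of each candidate against all $n$ constraints. Your route is more self-contained (no LP-type axioms and no random objective are needed, since you enumerate cells directly rather than rely on uniqueness of optima), but it requires you to prove the uniform-convergence and basis-enumeration lemmas yourself, including the technicality that cells of the arrangement restricted to the PSD cone are pinned by at most $O(d^2)$ tight constraints together with the cone boundary; the paper's route gets the sampling analysis for free from the cited theorem and also yields the stronger multiplicative $(1+\eps)k$ bound on the number of violations rather than only an additive one.

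On the obstacle you single out, improving the exponent from $O(d^2)$ to $O(d)$, you should know that the paper does not achieve it either: its combinatorial dimension bound is $O(d^2)$, the inner loop of Algorithm~\ref{fig:approx_algo} runs $\log^{O(d^2)} n$ times, and plugging $\kappa=O(d^2)$ into Lemma~\ref{lem:sariel} gives $d^{O(1)}\, n\, (\log n/\eps)^{O(d^2)}$, not the $(\log n/\eps)^{O(d)}$ stated in Theorem~\ref{thm:main}. So your ``naive count'' matches what the paper actually proves; the rank-one structure of the constraint normals $vv^{\top}$ is not exploited anywhere in the paper, and you should treat the $O(d)$ exponent as an unsupported feature of the theorem statement rather than as a missing idea in your own argument.
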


The above algorithm can be extended to handle various forms of regularization.
We also propose several modifications of our algorithm that lead to significant performance improvements in practice.
The final algorithm is evaluated experimentally on both synthetic and real-world data sets, and in a data poisoning scenario, and is compared against the currently best-known algorithms for the problem.

\subsection{Related Work}
Several algorithms for learning Mahalanobis metric spaces have been proposed.
Notable examples include 
the SDP based algorithm of Xing et al.~\cite{xing2003distance},
the algorithm of 
Globerson and Roweis for the fully supervised setting \cite{globerson2006metric},
Information Theoretic Metric Learning (\ITML) by Davis et al.~\cite{davis2007information}, which casts the problem as a particular optimization minimizing LogDet divergence,
as well as Large Margin Nearest Neighbor (\LMNN) by Weinberger et al.~\cite{weinberger2006distance}, which attempts to learn a metric geared towards optimizing $k$-NN classification.
We refer the reader to the surveys \cite{kulis2013metric,li2018survey} for a detailed discussion of previous work.
Our algorithm differs from previous approaches in that it seeks to directly minimize the number of violated pairwise distance constraints, which is a highly non-convex objective, without resorting to a convex relaxation of the corresponding optimization problem.

\subsection{Organization}
The rest of the paper is organized as follows.
Section \ref{sec:lp-type} describes the main algorithm and the proof of Theorem \ref{thm:main}.
Section \ref{sec:practical} discusses practical improvements used in the implementation of the algorithm.
Section \ref{sec:experiments} presents the experimental evaluation.

\section{Mahalanobis Metric Learning as an LP-Type Problem}\label{sec:lp-type}

In this Section we present an approximation scheme for Mahalanobis metric learning in $d$-dimensional Euclidean space, with nearly-linear running time.
We begin by recalling some prior results on the class of LP-type problems, which generalizes linear programming.
We then show that linear metric learning can be cast as an LP-type problem.

\subsection{LP-type Problems}
Let us recall the definition of an LP-type problem.
Let ${\cal H}$ be a set of constraints, and let $w:2^{\cal H}\to \mathbb{R}\cup \{-\infty,+\infty\}$, such that for any $G\subset {\cal H}$, $w(G)$ is the value of the optimal solution of the instance defined by $G$.
We say that $({\cal H}, w)$ defines an LP-type problem if the following axioms hold:
\begin{description}
\item{\textbf{(A1) Monotonicity.}}
For any $F\subseteq G \subseteq {\cal H}$, we have
$w(F)\leq w(G)$.

\item{\textbf{(A2) Locality.}}
For any $F\subseteq G \subseteq {\cal H}$, with $-\infty < w(F) = w(G)$, and any $h\in {\cal H}$, 
if 
$w(G) < w(G \cup \{h\})$,
then
$w(F) < w(F \cup \{h\})$.
\end{description}
More generally, we say that $({\cal H}, w)$ defines an LP-type problem on some ${\cal H}'\subseteq {\cal H}$, when conditions (A1) and (A2) hold for all $F\subseteq G \subseteq {\cal H}'$.

A subset $B\subseteq {\cal H}$ is called a \emph{basis} if  $w(B)>-\infty$ and $w(B') < w(B)$ for any proper subset $B'\subsetneq B$.
A \emph{basic operation} is defined to be one of the following:
\begin{description}
\item{\textbf{(B0) Initial basis computation.}}
Given some $G\subseteq {\cal H}$, compute any basis for ${\cal G}$.

\item{\textbf{(B1) Violation test.}}
For some $h\in {\cal H}$ and some basis $B\subseteq {\cal H}$, test whether $w(B\cup \{h\}) > w(B)$ (in other words, whether $B$ violates $h$).

\item{\textbf{(B2) Basis computation.}}
For some $h\in {\cal H}$ and some basis $B\subseteq {\cal H}$, compute a basis of $B\cup \{h\}$.
\end{description}

\subsection{An LP-type Formulation}
\begin{algorithm}[b]
\caption{An exact algorithm for Mahalanobis metric learning.\label{fig:algo1}}
\begin{algorithmic}
\Procedure{\ExactLPTML}{$F; B$}
\If{$F=\emptyset$}
    \State $\bA \gets \BasicLPTML(B)$ 
\Else
    \State choose $h\in F$ uniformly at random
    \State $\bA \gets \ExactLPTML(F-\{h\})$
    \If{$\bA$ violates $h$}
        \State $\bA := \ExactLPTML(F-\{h\}; B\cup \{h\})$
    \EndIf
  \EndIf    
\State \textbf{return} $\bA$
\EndProcedure
\end{algorithmic}
\end{algorithm}

We now show that learning Mahalanobis metric spaces can be expressed as an LP-type problem.
We first note that we can rewrite \eqref{eq:G_u} and \eqref{eq:G_l} as
\begin{align}
(p - q)^T \bA (p-q) &\leq u^2, \label{eq:A_u}
\end{align}
and
\begin{align}
(p - q)^T \bA (p-q) &\geq \ell^2, \label{eq:A_l}
\end{align}
where
$\bA=\bG^T \bG$ is positive semidefinite.

We define ${\cal H}=\{0,1\}\times {\mathbb{R}^d\choose 2}$, where for each $(0,\{p, q\})\in {\cal H}$, we have a constraint of type \eqref{eq:A_u}, and for every $(1,\{p,q\})\in {\cal H}$, we have a constraint of type \eqref{eq:A_l}.
Therefore, for any set of constraints $F\subseteq {\cal H}$, we may associate the set of feasible solutions for $F$ with the set ${\cal A}_F$ of all positive semidefinite matrices $\bA\in \mathbb{R}^{n\times n}$, satisfying \eqref{eq:A_u} and \eqref{eq:A_l} for all constraints in $F$.

Let $w:2^{\cal H}\to \mathbb{R}$, such that for all $F\in {\cal H}$, we have
\[
w(F) = \left\{\begin{array}{ll}
\inf_{\bA\in {\cal A}_F} r^T \bA r & \text{ if } {\cal A}_F \neq \emptyset\\
\infty & \text{ if } {\cal A}_F = \emptyset
\end{array}\right.,
\]
where $r\in \mathbb{R}^d$ is a vector chosen uniformly at random from the unit sphere from some rotationally-invariant probability measure.
Such a vector can be chosen, for example, by first choosing some $r'\in \mathbb{R}^d$, where each coordinate is sampled from the normal distribution ${\cal N}(0,1)$, and setting $r=r'/\|r'\|_2$.

\begin{lemma}\label{lem:ML_is_LP-type}
When $w$ is chosen as above, the pair $({\cal H}, w)$ defines an LP-type problem of combinatorial dimension $O(d^2)$, with probability 1.
Moreover, for any $n>0$, if each $r_{i}$ is chosen using $\Omega(\log n)$ bits of precision, then for each $F\subseteq {\cal H}$, with $n=|F|$, 
the assertion holds with high probability.
\end{lemma}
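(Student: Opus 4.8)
The plan is to verify the two LP-type axioms together with the combinatorial-dimension bound, treating the probabilistic genericity of $r$ as the mechanism that rules out degenerate ties. First I would establish monotonicity (A1): since adding constraints to $F$ can only shrink the feasible set ${\cal A}_F$, we have ${\cal A}_G\subseteq {\cal A}_F$ for $F\subseteq G$, so the infimum of the fixed linear functional $\bA\mapsto r^T\bA r$ over the smaller set $\cal A_G$ is at least as large; the case ${\cal A}_G=\emptyset$ gives $w(G)=\infty$, which trivially dominates. This direction is purely set-theoretic and should be a one-line argument.

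The substantive work is locality (A2). Here I would exploit that each constraint \eqref{eq:A_u}, \eqref{eq:A_l} is \emph{linear} in the matrix variable $\bA$ (the quadratic form $(p-q)^T\bA(p-q)$ is a linear function of the entries of $\bA$), and the PSD requirement is a convex constraint, so ${\cal A}_F$ is a closed convex subset of the $O(d^2)$-dimensional space of symmetric matrices, and $w(F)$ is the minimum of a fixed linear objective over that convex set. The plan is to argue that, because $r$ is drawn from a rotationally-invariant measure, with probability $1$ the minimizer is \emph{unique} and the optimum is \emph{attained} whenever it is finite, so that $w(F)=w(G)$ with $-\infty<w(F)$ forces the two problems to share the same optimal matrix $\bA^*$. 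Then, for any new constraint $h$, the inequality $w(G)<w(G\cup\{h\})$ means $\bA^*$ violates $h$, i.e. $\bA^*$ is infeasible for $h$; since $\bA^*$ is equally the optimum for $F$, adding $h$ must likewise strictly increase $w(F)$. I would phrase this via the standard equivalence: $w(F\cup\{h\})>w(F)$ iff the unique optimizer of $F$ fails $h$. The main obstacle is justifying the uniqueness-almost-surely claim: a linear objective over a convex set can have an entire face of optima, so I must show that for a rotationally-invariant random $r$ the event that the minimizing face has dimension $\geq 1$ has measure zero. I would handle this by noting that a nonunique minimizer occurs only when $r^T\bA r$ is constant along some edge direction of ${\cal A}_F$, i.e. when $r$ is orthogonal (in the trace inner product) to a difference of two extreme points; there are only countably many candidate flat directions arising from the finitely many bases, and each imposes a nontrivial linear condition on $r$, hence a measure-zero set, and a countable union of null sets is null.

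For the combinatorial dimension, the plan is to recall that ${\cal A}_F$ lives in the space of symmetric $d\times d$ matrices, which has dimension $\binom{d+1}{2}=O(d^2)$, and to invoke the fact that for linear programming in dimension $\delta$ the size of any basis is at most $\delta+1$; since our problem is a linear objective subject to linear (and one convex PSD) constraints over an $O(d^2)$-dimensional variable, a basis — a minimal constraint set determining the optimum — has size $O(d^2)$. I would make this precise by observing that at the optimum the active linear constraints together with the description of the PSD cone's supporting face pin down $\bA^*$ using $O(d^2)$ constraints, matching the Helly-type bound for LP-type problems of this dimension.

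Finally, for the finite-precision refinement, the plan is to replace the ``probability $1$'' statement by a union bound over the at most $\binom{n}{2}^{O(d^2)}$ relevant bad events (one per candidate degenerate flat direction among the $\mathrm{poly}(n^d)$ bases), showing each has probability at most $n^{-\Omega(1)}$ once $r$ is discretized to $\Omega(\log n)$ bits per coordinate. The main obstacle here is a quantitative anti-concentration estimate: I would argue that each null linear condition on $r$ becomes, under $\Omega(\log n)$-bit rounding, an event of probability $\mathrm{poly}(n)^{-1}$ because the relevant hyperplane in $r$-space is hit by a rounded Gaussian only with inverse-polynomial probability, and then choose the hidden constant in $\Omega(\log n)$ large enough to beat the $n^{O(d^2)}$ union bound, yielding the high-probability conclusion.
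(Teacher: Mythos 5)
Your treatment of (A1) and (A2) is essentially the paper's own argument: monotonicity is the same one-line set inclusion, and for locality the paper likewise reduces everything to almost-sure uniqueness of the infimizer of $\bA \mapsto r^T\bA r$, arguing that two distinct infimizers would force $r$ to be orthogonal to a fixed direction (a probability-zero event for rotationally invariant $r$) and then union-bounding in the finite-precision setting; you spell out the ``uniqueness implies locality'' deduction more explicitly than the paper does, which is a virtue. Where you genuinely diverge is the combinatorial-dimension bound. The paper identifies each feasible $\bA$ with the ellipsoid ${\cal E}_\bA$, translates each similarity (resp.\ dissimilarity) constraint into the condition that $(p-q)/u$ lies inside (resp.\ $(p-q)/\ell$ lies on or outside) that ellipsoid, and invokes the specific geometric fact that an ellipsoid in $\mathbb{R}^d$ is determined by at most $(d+3)d/2$ boundary points \cite{welzl1991smallest,chazelle2000discrepancy}. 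You instead note that the constraints are linear in the entries of the symmetric matrix $\bA$ and appeal to the general Helly-type bound that convex programming in an $O(d^2)$-dimensional ambient space has bases of size $O(d^2)$. Both yield the same bound; the paper's route is more concrete via the cited ellipsoid fact, while yours is more generic and would extend to any linear-in-$\bA$ constraint family, at the cost of having to justify the basis-size bound for intersections of halfspaces with the non-polyhedral PSD cone. Two shared loose ends, present in both your write-up and the paper's: neither addresses attainment of the infimum on the possibly unbounded spectrahedron ${\cal A}_F$, and your ``countably many flat directions'' bookkeeping should be indexed by the finitely many subsets of the given instance rather than by extreme points of a non-polyhedral set; these are at the same level of informality as the original, so I do not count them against you.
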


\begin{proof}
Since adding constraints to a feasible instance can only make it infeasible, it follows that $w$ satisfies the monotonicity axiom (A1).

We next argue that the locality axion (A2) also holds, with high probability.
Let $F\subseteq G \subseteq {\cal H}$, with $-\infty<w(F)=w(G)$, and let $h\in {\cal H}$, with $w(G)<w(G\cup \{h\})$.
Let $\bA_F\in {\cal A}_F$ and $\bA_G\in {\cal A}_G$ be some (not necessarily unique) infimizers of $w(\bA)$, when $\bA$ ranges in ${\cal A}_F$ and ${\cal A}_G$ respectively.
The set ${\cal A}_F$, viewed as a convex subset of $\mathbb{R}^{d^2}$, is the intersection of the SDP cone with $n$ half-spaces, and thus ${\cal A}_F$ has at most $n$ facets.
There are at least two distinct infimizers for $w(\bA_G)$, when $\bA_G\in {\cal A}_G$, only when the randomly chosen vector $r$ is orthogonal to a certain direction, which occurs with probability 0.
When each entry of $r$ is chosen with $c\log n$ bits of precision, the probability that $r$ is orthogonal to any single hyperplane is at most $2^{-c \log n}= n^{-c}$; the assertion follows by a union bound over $n$ facets.
This establishes that axiom (A2) holds with high probability.

It remains to bound the combinatorial dimension, $\kappa$.
Let $F\subseteq {\cal H}$ be a set of constraints.
For each $\bA\in {\cal A}_F$, define the ellipsoid
\[
{\cal E}_\bA = \{v \in \mathbb{R}^d : \|\bA \bv\|_2 = 1\}.
\]
For any $\bA, \bA'\in {\cal A}_F$, with ${\cal E}_\bA={\cal E}_{\bA'}$, and $\bA=\bG^T \bG$, $\bA'=\bG'^T \bG'$, we have that for all $p,q\in \mathbb{R}^d$,  $\|\bG p-\bG q\|_2=(p-q)^T \bA (p-q) = (p-q)^T \bA' (p-q)=\|\bG' p-\bG' q\|_2$.
Therefore in order to specify a linear transformation $\bG$, up to an isometry, it suffices to specify the ellipsoid ${\cal E}_{\bA}$.

Each $\{p,q\}\in {\cal S}$ corresponds to the constraint that the point $(p-q)/u$ must lie in ${\cal E}_\bA$.
Similarly each $\{p,q\}\in {\cal D}$ corresponds to the constraint that the point $(p-q)/\ell$ must lie either on the boundary or the exterior of ${\cal E}_\bA$.
Any ellipsoid in $\mathbb{R}^d$ is uniquely determined by specifying at most $(d+3)d/2 = O(d^2)$ distinct points on its boundary (see \cite{welzl1991smallest,chazelle2000discrepancy}).
Therefore, each optimal solution can be uniquely specified as the intersection of at most $O(d^2)$ constraints, and thus the combinatorial dimension is $O(d^2)$. 
\end{proof}

\begin{lemma}\label{lem:basis_comp}
Any initial basis computation (B0),
any violation test (B1),
and any basis computation (B2) can be performed in time $d^{O(1)}$.
\end{lemma}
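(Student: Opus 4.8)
The plan is to observe that all three basic operations reduce to solving a single fixed-dimensional convex program, and then to treat each operation in turn. The key structural point is that, for a fixed set $F$ of constraints that must be satisfied, the feasible region $\mathcal{A}_F$ is convex: writing $M = (p-q)(p-q)^T$, each similarity constraint \eqref{eq:A_u} reads $\langle M, \bA\rangle \le u^2$ and each dissimilarity constraint \eqref{eq:A_l} reads $\langle M, \bA\rangle \ge \ell^2$, both linear in the entries of $\bA$, while the requirement $\bA \succeq 0$ is the PSD cone. The objective $r^T \bA r = \langle r r^T, \bA\rangle$ is likewise linear in $\bA$. Hence evaluating $w(F)$ is a semidefinite program over the $\binom{d+1}{2} = O(d^2)$ independent entries of the symmetric matrix $\bA$, subject to $|F|$ linear constraints and the PSD constraint.

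First I would dispatch the violation test (B1), which is the easiest. I maintain, alongside each basis $B$, an optimizer $\bA_B \in \mathcal{A}_B$ realizing $w(B)$. By monotonicity (A1) together with the generic uniqueness of the optimizer established in the proof of Lemma \ref{lem:ML_is_LP-type} (which holds with high probability for the randomly chosen $r$), we have $w(B \cup \{h\}) > w(B)$ precisely when $\bA_B$ fails to satisfy $h$: if $\bA_B$ satisfies $h$ it remains feasible and optimal for $B \cup \{h\}$, whereas if it violates $h$ then either the instance becomes infeasible or the unique optimizer must move, strictly increasing the objective. Testing whether $\bA_B$ satisfies $h = (\cdot,\{p,q\})$ amounts to evaluating $(p-q)^T \bA_B (p-q)$ and comparing it against $u^2$ or $\ell^2$, which takes $O(d^2)$ time.

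Next I would handle the initial basis computation (B0) and the basis computation (B2) together, since both reduce to solving the convex program above on a set of $O(d^2)$ constraints: for (B2) this is $B \cup \{h\}$, whose size is at most one more than the combinatorial dimension $O(d^2)$ from Lemma \ref{lem:ML_is_LP-type}, and for (B0) it is the current candidate basis, also of size $O(d^2)$. Since both the number of variables and the number of constraints are $\mathrm{poly}(d)$, this is a convex program of dimension polynomial in $d$, solvable to the required accuracy by a standard interior-point or ellipsoid method. Having computed the optimal $\bA$, I read off the constraints that are tight at the optimum; the active set contains a basis, and since there are only $O(d^2)$ constraints I can prune it to a minimal subset $B'$ with $w(B') = w(B\cup\{h\})$ by attempting to remove each constraint in turn, again in $d^{O(1)}$ time. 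I store the optimizer with the returned basis so that later (B1) tests remain cheap.

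The main obstacle I anticipate is the precision to which the semidefinite program must be solved: unlike an LP, its optimal value need not be rational, so there is no exact closed form in general. I would resolve this exactly as the combinatorial-dimension argument does, by carrying $\Theta(\log n)$ bits of precision — the same precision the proof of Lemma \ref{lem:ML_is_LP-type} already assumes for the entries of $r$. This suffices both to identify the active constraints correctly and to break ties in the objective with high probability, and in the unit-cost model underlying the stated bounds the solver performs only $d^{O(1)}$ arithmetic operations to reach this accuracy. The remaining steps — forming the matrices $M = (p-q)(p-q)^T$, evaluating the quadratic forms, and pruning the active set — are routine linear algebra on $O(d^2)$-sized objects and are likewise $d^{O(1)}$.
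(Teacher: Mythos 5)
Your proof is correct and follows essentially the same route as the paper: every basic operation reduces to solving a semidefinite program over the $O(d^2)$ entries of the symmetric matrix $\bA$ with $O(d^2)$ linear constraints plus the PSD cone, and (B2) is handled by pruning $B\cup\{h\}$ one constraint at a time, solving $d^{O(1)}$ such SDPs. The one place you genuinely diverge is the violation test (B1): the paper simply solves two SDPs, for $w(B)$ and $w(B\cup\{h\})$, and compares the values, whereas you store an optimizer $\bA_B$ alongside the basis and test $h$ by evaluating a single quadratic form, justifying the equivalence through the generic uniqueness of the optimizer established in the proof of Lemma \ref{lem:ML_is_LP-type}. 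Your version is cheaper per test ($O(d^2)$ arithmetic rather than an SDP solve) and is closer to how one would implement the algorithm, but it conditions on the high-probability uniqueness event and requires the invariant that an optimizer is carried with every basis; the paper's version needs neither. For (B0) the paper takes an even shorter cut --- it observes that any single similarity constraint already forms a valid initial basis --- while you solve-and-prune, which also works since the routine is only ever invoked on sets of size $O(d^2)$. Your remarks on the bit precision required of the SDP solver address a point the paper leaves implicit; neither argument is fully rigorous about attainment of the infimum or exact SDP solvability, but that issue is orthogonal to the lemma as stated.
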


\begin{proof}
The violation test (B1) can be performed by solving one SDP to compute $w(B)$, and another to compute $w(B\cup \{h\})$.
By Lemma \ref{lem:ML_is_LP-type} the combinatorial dimension is $O(d^2)$, thus each SDP has $O(d^2)$ constraints, and be solved in time $d^{O(1)}$.

The basis computation step (B2) can be performed starting with the set of constraints $B\cup \{h\}$, and iteratively remove every constraint whose removal does not decrease the optimum cost, until we arrive at a minimal set, which is a basis.
In total, we need to solve at most $d$ SDPs, each of size $O(d^2)$, which can be done in total time $d^{O(1)}$.

Finally, by the choice of $w$, any set containing a single constraint in ${\cal S}$ is a valid initial basis.
\end{proof}

\subsection{Algorithmic implications}

\begin{algorithm}[b]
\caption{An approximation algorithm for Mahalanobis metric learning.\label{fig:approx_algo}}
\begin{algorithmic}
\Procedure{\LPTML}{$F$}
    \For{$i=0$ to $\log_{1+\eps} n$}
        \State $p \gets (1+\eps)^{-i}$ 
        \For{$j=1$ to $\log^{O(d^2)} n$}
            \State subsample $F_j\subseteq F$, where each element is chosen independently with probability $p$
            \State $\bA_{i,j} \gets \ExactLPTML(F_j)$
        \EndFor
    \EndFor
    \State \textbf{return} a solution out of $\{\bA_{i,j}\}_{i,j}$, violating the minimum number of constraints in $F$
\EndProcedure
\end{algorithmic}
\end{algorithm}

Using the above formulation of Mahalanobis metric learning as an LP-type problem, we can obtain our approximation scheme.
Our algorithm uses as a subroutine an \emph{exact} algorithm for the problem (that is, for the special case where we seek to find a mapping that satisfies all constraints).
We first present the exact algorithm and then show how it can be used to derive the approximation scheme.

\paragraph{An exact algorithm.}
\cite{welzl1991smallest} obtained a simple randomized linear-time algorithm for the minimum enclosing ball and minimum enclosing ellipsoid problems.
This algorithm naturally extends to general LP-type problems (we refer the reader to  \cite{har2011geometric,chazelle2000discrepancy} for further details).

With the interpretation of Mahalanobis metric learning as an LP-type problem given above, we thus obtain a linear time algorithm for in $\mathbb{R}^d$, for any constant $d\in \mathbb{N}$.
The resulting algorithm on a set of constraints $F\subseteq {\cal H}$ is implemented by the procedure $\ExactLPTML(F; \emptyset)$,
which is presented in Algorithm \ref{fig:algo1}.
The procedure $\LPTML(F;B)$ takes as input sets of constraints $F,B\subseteq {\cal H}$.
It outputs a solution $\mathbf{A} \in \mathbb{R}^{d\times d}$ to the problem induced by the set of constraints $F\cup B$, such that all constraints in $B$ are tight (that is, they hold with equality);
if no such solution solution exists, then it returns $\nil$.
The procedure $\BasicLPTML(B)$ computes $\LPTML(\emptyset; B)$.
The analysis of \cite{welzl1991smallest} implies that when $\BasicLPTML(B)$ is called, the cardinality of $B$ is at most the combinatorial dimension, which by Lemma \ref{lem:ML_is_LP-type} is $O(d^2)$.
Thus the procedure \BasicLPTML~can be implemented using one initial basis computation (B0) and $O(d^2)$ basis computations (B2), which by Lemma \ref{lem:basis_comp} takes total time $d^{O(1)}$.

\paragraph{An $(1+\eps)$-approximation algorithm.}
It is known that the above exact linear-time algorithm leads to an nearly-linear-time approximation scheme for LP-type problems.
This is summarized in the following.
We refer the reader to \cite{har2011geometric} for a more detailed treatment.

\begin{lemma}[\cite{har2011geometric}, Ch.~15]\label{lem:sariel}
Let ${\cal A}$ be some LP-type problem of combinatorial dimension $\kappa>0$, defined by some pair $({\cal H},w)$, and let $\eps>0$.
There exists a randomized algorithm which given some instance $F\subseteq {\cal H}$, with $|F|=n$, outputs some basis $B\subseteq F$, that violates at most $(1+\eps) k$ constraints in $F$, such that $w(B) \leq w(B')$, for any basis $B'$ violating at most $k$ constraints in $F$, in time 
$O\left(t_0+\left(n + n\min\left\{\frac{\log^{\kappa+1}n}{\eps^{2\kappa}}, \frac{\log^{\kappa+2}n}{k \eps^{2\kappa+2}}\right\}\right)(t_1+t_2)\right)$,
where $t_0$ is the time needed to compute an arbitrary initial basis of ${\cal A}$, and $t_1$, $t_2$, and $t_3$ are upper bounds on the time needed to perform the basic operations (B0), (B1) and (B2) respectively.
The algorithm succeeds with high probability.
\end{lemma}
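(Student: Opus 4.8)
The plan is to follow the random-sampling reduction of \cite{har2011geometric}, which turns the problem of approximating the minimum-weight basis at a prescribed violation level into a polylogarithmic number of calls to the \emph{exact} LP-type solver $\ExactLPTML$ (Algorithm~\ref{fig:algo1}); this is precisely the control structure of $\LPTML$ (Algorithm~\ref{fig:approx_algo}). The guiding intuition is that a basis $B'$ which violates a set $V$ of at most $k$ constraints becomes an optimal feasible solution for $F\setminus V$, so if a random subsample of $F$ happens to miss all of $V$, then the exact optimum of the subsample cannot have larger weight than $w(B')$; at the same time, the combinatorial dimension forces the exact optimum of a rate-$p$ subsample to violate only about $\kappa/p$ constraints of $F$, so a suitable choice of $p$ makes this close to $k$.

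First I would fix a target $k$ and a candidate basis $B'$ of minimum weight among those violating at most $k$ constraints, with violated set $V$, $|V|\le k$. For a subsample $R\subseteq F$ that retains each constraint independently with probability $p$, I would establish the weight guarantee by conditioning on the event $R\cap V=\emptyset$: then $R\subseteq F\setminus V$, and monotonicity (A1) together with the feasibility of $B'$ for $F\setminus V$ gives $w(R)\le w(F\setminus V)\le w(B')$, so the basis $B_R$ returned by $\ExactLPTML(R)$ satisfies $w(B_R)=w(R)\le w(B')$. Since $\Pr[R\cap V=\emptyset]\ge (1-p)^{k}\ge e^{-O(pk)}$, taking $p=\Theta(\kappa/k)$ makes this event occur with probability bounded below by a constant depending only on $\kappa$.

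Next I would control the number of constraints of $F$ that $B_R$ violates. Invoking the Clarkson--Shor sampling lemma for LP-type problems (as developed in \cite{har2011geometric}), the expected number of constraints of $F$ that violate the exact optimum of a rate-$p$ subsample is $O(\kappa/p)$; with $p=\Theta(\kappa/k)$ this is $\Theta(k)$, and a Chernoff-type concentration argument, together with a union bound over the $n^{O(\kappa)}$ distinct bases, tightens the bound to $(1+\eps)k$ at the cost of the $\log^{\kappa}n$ and $\eps^{-2\kappa}$ factors appearing in the running time. Repeating the sampling $\log^{O(\kappa)}n$ times ensures that the good event---$R$ avoids $V$ \emph{and} $B_R$ has level at most $(1+\eps)k$---occurs for at least one sample with high probability, and returning the sampled solution of minimum level yields a valid output. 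Because $k$ is not known in advance, I would search the sampling rate over the geometric grid $p=(1+\eps)^{-i}$, which is exactly the outer loop of Algorithm~\ref{fig:approx_algo}; the per-call cost is that of Welzl's algorithm \cite{welzl1991smallest} on a sample, namely $O(t_0+|R|(t_1+t_2))$, and summing over all grid points and repetitions, while charging the smaller of the two sampling analyses, produces the stated $\min\{\cdot,\cdot\}$ running time.

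\textbf{Main obstacle.} The delicate part is obtaining the factor $(1+\eps)$ on the number of violations, rather than a crude $O(\kappa)$ or $O(1)$ factor. This requires the sampling rate to be tuned to $k$ and the level of the sample optimum to be controlled in the tails, not merely in expectation, \emph{simultaneously} with the weight guarantee; the two requirements pull $p$ in opposite directions (avoiding $V$ wants $p$ small, matching the level wants $p$ not too small), and reconciling them is what forces the polylogarithmic repetition count and the $\eps$-dependence. Making the relative level approximation hold uniformly over all $n^{O(\kappa)}$ bases---so that the minimum-level sampled solution is provably near-optimal---is the technical core, and is where the combinatorial dimension $\kappa$ enters the exponents of the running time.
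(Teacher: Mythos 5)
The first thing to note is that the paper contains no proof of this statement: Lemma~\ref{lem:sariel} is imported verbatim from \cite{har2011geometric}, Ch.~15, and is used as a black box in the one-line proof of Theorem~\ref{thm:main}. So there is no in-paper argument to compare yours against; what can be assessed is whether your sketch would stand on its own. Its architecture is the right one (rate-$p$ subsampling, exact solves on the samples, a geometric grid over $p$, polylogarithmic repetition --- exactly the control structure of Algorithm~\ref{fig:approx_algo}), and the weight guarantee is handled correctly: if $R\cap V=\emptyset$ then $B'\subseteq F\setminus V$, locality gives $w(F\setminus V)=w(B')$, and monotonicity gives $w(R)\le w(B')$.

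The genuine gap is exactly where you place it, and flagging it does not close it. With $p=\Theta(\kappa/k)$, the Clarkson--Shor bound gives an \emph{expected} level of $O(\kappa/p)=O(k)$ for the sample optimum, i.e.\ only a constant-factor approximation of the violation count; and the Chernoff-plus-union-bound step you invoke to ``tighten'' this to $(1+\eps)k$ does not go through at that sampling rate. Concentration of a fixed basis's sampled level within a $(1\pm\eps)$ multiplicative factor, uniformly over the $n^{O(\kappa)}$ candidate bases, requires the expected sampled level $p\ell$ to be $\Omega(\kappa\log n/\eps^2)$, i.e.\ $p=\Omega(\kappa\log n/(\eps^2 k))$; but then $\Pr[R\cap V=\emptyset]\approx e^{-pk}$ is polynomially small in $n$, destroying the weight guarantee. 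Your two requirements on $p$ are therefore not merely ``in tension'' --- as stated they are incompatible, and the actual reconciliation (which is responsible for the $\log^{O(\kappa)}n$ repetition count and the precise $\min\{\cdot,\cdot\}$ form of the running time) is the entire content of the cited theorem. As written, your proposal defers its hardest step to the same reference the paper cites, so it is an outline of the known proof rather than a proof.
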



For the special case of Mahalanobis metric learning, the corresponding algorithm is given in Algorithm \ref{fig:approx_algo}.
The approximation guarantee for this algorithm is summarized in \ref{thm:main}.
We can now give the proof of our main result.

\begin{proof}[Proof of Theorem \ref{thm:main}]
Follows immediately by Lemmas \ref{lem:basis_comp} and \ref{lem:sariel}.
\end{proof}

\paragraph{Regularization.}

We now argue that the LP-type algorithm described above can be extended to handle certain types of regularization on the matrix $\bA$.
In methods based on convex optimization, introducing regularizers that are convex functions can often be done easily.
In our case, we cannot directly introduce a regularizing term in the objective function that is implicit in Algorithm \ref{fig:approx_algo}. 
More specifically, let $\cost(\bA)$ denote the total number of constraints of type \eqref{eq:A_u} and \eqref{eq:A_l} that $\bA$ violates.
Algorithm \ref{fig:approx_algo} approximately minimizes the objective function $\cost(\bA)$.
A natural regularized version of Mahalanobis metric learning is to instead minimize the objective function
$\cost'(\bA) := \cost(\bA) + \eta \cdot \reg(\bA)$,
for some $\eta>0$, 
and regularizer $\reg(\bA)$.
One typical choice is
$\reg(\bA) = \tr(\bA \bC)$, for some matrix $\bC \in \mathbb{R}^{d\times d}$; the case $\bC=\bI$ corresponds to the trace norm (see \cite{kulis2013metric}).
We can extend the Algorithm \ref{fig:approx_algo} to handle any regularizer that can be expressed as a linear function on the entries of $\bA$, such as $\tr(\bA)$.
The following summarizes the result.

\begin{theorem}\label{thm:reg}
Let $\reg(\bA)$ be a linear function on the entries of $\bA$, with polynomially bounded coefficients.
For any $d\in \mathbb{N}$, $\eps>0$,
there exists a randomized algorithm for learning $d$-dimensional Mahalanobis metric spaces, which given an instance that admits a solution $\bA_0$ with $\cost'(\bA_0)=c^*$,
computes a solution $\bA$ with $\cost'(\bA) \leq (1+\eps) c^*$,
in time 
$d^{O(1)} n (\log{n}/\eps)^{O(d)}$,
with high probability.
\end{theorem}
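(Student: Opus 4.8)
The plan is to reduce Theorem \ref{thm:reg} to the unregularized case by showing that the regularized objective still fits the LP-type framework of Lemma \ref{lem:ML_is_LP-type}, so that Lemma \ref{lem:sariel} applies verbatim. The key observation is that the regularizer $\reg(\bA)$, being a linear function on the entries of $\bA$, can be folded into the definition of the weight function $w$ without disturbing the combinatorial structure. Concretely, I would first modify $w$ so that for a feasible set of constraints $F$, the value $w(F)$ incorporates a tie-breaking that prefers solutions of smaller regularizer value. That is, instead of optimizing the random linear functional $r^T \bA r$ alone, I would optimize a combined objective $r^T \bA r + \delta \cdot \reg(\bA)$ for an infinitesimally small (or sufficiently small, polynomially-bounded) $\delta > 0$, or equivalently optimize $\reg(\bA)$ as the primary objective with $r^T \bA r$ used only to break ties. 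Since $\reg$ is linear on $\mathbb{R}^{d^2}$, this composite objective is still linear, and ${\cal A}_F$ is still the intersection of the PSD cone with at most $n$ half-spaces, so the geometric argument bounding the combinatorial dimension by $O(d^2)$ goes through unchanged.

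Next I would reverify the two LP-type axioms. Monotonicity (A1) is immediate for the feasibility part exactly as before, since adding constraints only shrinks ${\cal A}_F$. For locality (A2), the same argument applies: the boundary case where two distinct minimizers of the composite linear objective exist over a face of ${\cal A}_F$ occurs only when the direction defining the objective is orthogonal to some facet, and since $r$ is still chosen randomly, this degenerate event has probability zero (and is controlled by the union bound over $n$ facets when $r$ uses $\Omega(\log n)$ bits of precision). The linearity of $\reg$ ensures the composite objective is a generic linear functional for almost all $r$, preserving the with-high-probability guarantee. I would then note that the basic operations (B0), (B1), (B2) of Lemma \ref{lem:basis_comp} are still computable in time $d^{O(1)}$, because each amounts to solving an SDP with $O(d^2)$ constraints and a linear objective; adding the linear term $\reg(\bA)$ to the SDP objective changes neither the number of constraints nor the polynomial-time solvability.

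The one genuine subtlety, and what I expect to be the main obstacle, is relating the cost measure that Lemma \ref{lem:sariel} approximately minimizes (the number of violated constraints, with $w(B)$ used as a secondary objective among bases violating at most $k$ constraints) to the regularized objective $\cost'(\bA) = \cost(\bA) + \eta \cdot \reg(\bA)$. Lemma \ref{lem:sariel} outputs a basis violating at most $(1+\eps)k$ constraints and minimizing $w(B)$ among all bases violating at most $k$ constraints; I must argue that running this for the right value of $k$ and taking $w = \reg$ (with the random functional only for genericity) yields a solution whose full regularized cost is within a $(1+\eps)$ factor of $c^*$. The clean way to handle this is to observe that for the optimal solution $\bA_0$ with $\cost'(\bA_0) = c^*$, its number of violated constraints $k_0 = \cost(\bA_0)$ and its regularizer value $\reg(\bA_0)$ together satisfy $\cost(\bA_0) + \eta\,\reg(\bA_0) = c^*$; guessing $k_0$ by trying all $O(n)$ candidate values (or, as in Algorithm \ref{fig:approx_algo}, sweeping the subsampling probability geometrically), and for each candidate minimizing $\reg$ via the LP-type machinery, I recover a solution with at most $(1+\eps)k_0$ violations and regularizer at most $\reg(\bA_0)$, so its combined cost is at most $(1+\eps)\cost(\bA_0) + \eta\,\reg(\bA_0) \le (1+\eps)c^*$.

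The care required is ensuring that both terms of $\cost'$ are simultaneously controlled rather than trading off against each other, which is why the polynomially-bounded-coefficient hypothesis on $\reg$ matters: it guarantees that the values taken by $\reg$ lie in a range that can be discretized to within a $(1+\eps)$ multiplicative factor using only $O(\log n / \eps)$ guesses, keeping the total running time at $d^{O(1)} n (\log n / \eps)^{O(d)}$. Assembling these pieces, the running-time bound follows exactly as in the proof of Theorem \ref{thm:main} by combining Lemma \ref{lem:basis_comp} with Lemma \ref{lem:sariel}, with the extra logarithmic sweep over the regularizer's value absorbed into the $(\log n / \eps)^{O(d)}$ factor.
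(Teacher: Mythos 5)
Your argument is essentially sound, but it takes a genuinely different route from the paper. The paper keeps the LP-type objective $w$ exactly as in Lemma \ref{lem:ML_is_LP-type} and instead \emph{guesses} the value of $\reg(\bA_0)$ up to a factor $(1+\eps)$ (using the polynomial bound on the coefficients to restrict the search to $O(\log n + \log(1/\eps))$ geometric buckets), adds $\reg(\bA) \leq (1+\eps)^i$ as a hard constraint to every SDP in the basic operations, and reruns Algorithm \ref{fig:approx_algo} for each guess; this costs a multiplicative $O(\log n + \log(1/\eps))$ in running time and loses a $(1+\eps)$ factor on the regularizer term, which still yields $\cost'(\bA)\leq(1+\eps)c^*$. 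You instead fold $\reg$ into the LP-type weight function as the primary objective and invoke the secondary guarantee of Lemma \ref{lem:sariel} ($w(B)\leq w(B')$ over bases violating at most $k$ constraints), which avoids the guessing entirely and gives $\reg(\bA)\leq\reg(\bA_0)$ exactly; the price is that you must re-verify locality and the combinatorial dimension bound for the new objective, which you do correctly via the random tie-breaker. Two caveats. First, your two formulations of the composite objective are \emph{not} equivalent: with infinitesimal $\delta$, minimizing $r^T\bA r + \delta\cdot\reg(\bA)$ makes $r^T\bA r$ primary and $\reg$ the tie-breaker, which does not yield $\reg(\bA)\leq\reg(\bA_0)$; your argument needs the other order, $\reg(\bA)+\delta\, r^T\bA r$, i.e., $\reg$ primary with random tie-breaking. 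Second, your closing paragraph reverts to discretizing the range of $\reg$, which is the paper's mechanism and is unnecessary in your scheme (in your route the polynomially-bounded-coefficient hypothesis plays essentially no role); also note that both your final inequality $(1+\eps)\cost(\bA_0)+\eta\,\reg(\bA_0)\leq(1+\eps)c^*$ and the paper's bucketing implicitly require $\reg(\bA_0)\geq 0$.
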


\begin{proof}
If $\eta < \eps^t$, for sufficiently large constant $t>0$, since the coefficients in $\reg(\bA)$ are polynomially bounded, it follows that the largest possible value of $\eta \cdot \reg(\bA)$ is $O(\eps)$, and can thus be omitted without affecting the result.
Similarly, if $\eta>(1/\eps)n^{t'}$, for sufficiently large constant $t'>0$, since there are at most ${n \choose 2}$ constraints, it follows that the term $\cost(\bA)$ can be omitted form the objective.
Therefore, we may assume w.l.o.g.~that $\reg(A_0) \in [\eps^{O(1)}, (1/\eps) n^{O(1)}]$.
We can guess some $i=O(\log n + \log(1/\eps))$, such that $\reg(A_0) \in ((1+\eps)^{i-1}, (1+\eps)^{i}]$.
We modify the SDP used in the proof of Lemma \ref{lem:basis_comp} by introducing the constraint 
$\reg(\bA) \leq (1+\eps)^i$.
Guessing the correct value of $i$ requires $O(\log n + \log(1/\eps))$ executions of Algorithm \ref{fig:approx_algo}, which implies the running time bound.
\end{proof}

\begin{figure}[t]
    \centering
    \begin{subfigure}[t]{0.5\linewidth}
        \centering
        \begin{tikzpicture}
\begin{axis}[
	xlabel={Iterations},
	ylabel={Fraction of violated constraints},
	xtick={0, 10, 20, 30, 40, 50},
	height=5cm, 
	legend style={
           at={(0,0)},
           anchor=north, at={(axis description cs:0.75,0.95)}},
    legend columns={1}
]
    \addplot[color={green!40!black}, mark=triangle*, mark options={fill=white}, error bars/error bar style={darkgray}]
    plot[error bars/.cd, y dir=both, y explicit] 
    coordinates{
        (9.1, 0.40) +- (0, 0.08)
        (19.1, 0.3686) +- (0, 0.07) 
        (29.1,0.3518) +- (0, 0.01)
        (39.1,0.3501) +- (0, 0.01)
        (49.1,0.3460) +- (0, 0.009)};
    \addplot [color=blue, mark=*, mark options={fill=white}, error bars/error bar style={darkgray}] 
    plot[error bars/.cd, y dir=both, y explicit]     
    coordinates{
        (10, 0.3967) +-(0, 0.06)
        (20, 0.3624) +-(0, 0.02)
        (30, 0.3517) +-(0, 0.015)
        (40, 0.3494) +-(0, 0.0149)
        (50, 0.3440) +- (0, 0.011)
    };
    \addplot [color=red, mark=square*, mark options={fill=white}, error bars/error bar style={darkgray}] 
    plot[error bars/.cd, y dir=both, y explicit]     
    coordinates{
        (10.9, 0.41) +- (0, 0.1)
        (20.9, 0.37) +- (0, 0.03)
        (30.9, 0.3558) +- (0, 0.019)
        (40.9, 0.3501) +- (0, 0.014)
        (50.9, 0.3460)+- (0, 0.011)
    };   
    \legend{$d=4$, $d=8$, $d=12$}
\end{axis}
\end{tikzpicture}
        \caption{Fraction of violated constraints}%
        \label{fig:hist_label_perturbations_wine1}   
    \end{subfigure}%
\hfill%
    \begin{subfigure}[t]{0.5\linewidth}
        \centering
        \begin{tikzpicture}
\begin{axis}[
	xlabel={Iterations},
	ylabel={Accuracy of $k$-NN},
	xtick={0,10,20,30,40,50},
	ymin=0.2,ymax=0.9,
	height=5cm, 
	legend style={
           at={(0,0)},
           anchor=north, at={(axis description cs:0.75,0.45)}},
    legend columns={1}
]
    \addplot [color=green!40!black, mark=triangle*, mark options={fill=white}, error bars/error bar style={darkgray}] 
    plot[error bars/.cd, y dir=both, y explicit]     
    coordinates{
        (9.10, 0.57) +- (0, 0.16)
        (19.1, 0.62) +- (0, 0.12)
        (29.1, 0.63) +- (0, 0.11) 
        (39.1, 0.63) +- (0, 0.11) 
        (49.1, 0.65) +- (0, 0.09)
    };
    \addplot [color=blue, mark=*, mark options={fill=white}, error bars/error bar style={darkgray}]    
    plot[error bars/.cd, y dir=both, y explicit]     
    coordinates{
        (10, 0.60) +- (0, 0.11) 
        (20, 0.62) +- (0, 0.11) 
        (30, 0.65) +- (0, 0.09) 
        (40, 0.65) +- (0, 0.09) 
        (50, 0.64) +- (0, 0.09)
    };
    \addplot [color=red, mark=square*, mark options={fill=white}, error bars/error bar style={darkgray}]
    plot[error bars/.cd, y dir=both, y explicit]     
    coordinates{
        (10.9, 0.59) +- (0, 0.13) 
        (20.9, 0.62) +- (0, 0.11) 
        (30.9, 0.64) +- (0, 0.09) 
        (40.9, 0.64) +- (0, 0.1) 
        (50.9, 0.64) +- (0, 0.09)
    };        
    \legend{$d=4$, $d=8$, $d=12$}
\end{axis}
\end{tikzpicture}
        \caption{Accuracy}%
        \label{fig:hist_accuracy_iter}    
    \end{subfigure}%
    \caption{Fraction of violated constraints (left) and accuracy (right) as the number of iterations  of \LPTML~increases (average over 10 executions). Each curve corresponds to the Wine data reduced to $d$ dimensions using PCA.} \label{fig:wine_viol_acc}
\end{figure}
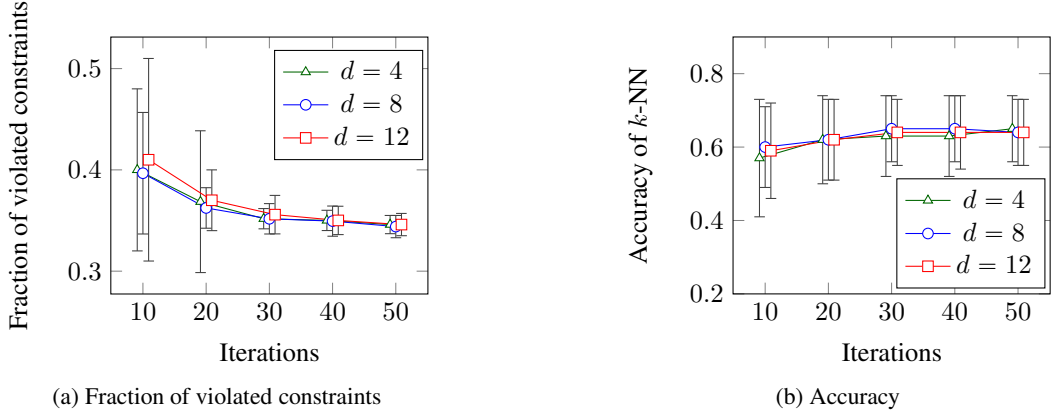

\section{Practical Improvements and Parallelization}
\label{sec:practical}

We now discuss some modifications of the algorithm described in the previous section that significantly improve its performance in practical scenarios, and have been integrated in our implementation.

\paragraph{Move-to-front and pivoting heuristics.}
We use heuristics that have been previously used in algorithms for linear programming \cite{seidel1990linear,clarkson1995vegas},  minimum enclosing ball in $\mathbb{R}^3$ \cite{megiddo1983linear}, minimum enclosing ball and ellipsoid is $\mathbb{R}^d$, for any fixed $d\in \mathbb{N}$ \cite{welzl1991smallest}, as well as in fast implementations of minimum enclosing ball algorithms \cite{gartner1999fast}.
The \emph{move-to-front} heuristic keeps an ordered list of constraints which gets reorganized as the algorithm runs; when the algorithm finds a violation, it moves the violating constraint to the beginning of the list of the current sub-problem. 
The \emph{pivoting} heuristic further improves performance by choosing to add to the basis the constraint that is ``violated the most''. For instance, for similarity constraints, we pick the one that is mapped to the largest distance greater than $u$; for dissimilarity constraints, we pick the one that is mapped to the smallest distance less than $\ell$.

\paragraph{Approximate counting.}
The main loop of Algorithm \ref{fig:approx_algo} involves counting the number of violated constraints in each iteration.
In problems involving a large number of constraints, we use approximate counting by only counting the number of violations within a sample of $O(\log 1/\eps)$ constraints.

\paragraph{Early termination.}
A bottleneck of Algorithm \ref{fig:approx_algo} stems from the fact that the inner loop needs to be executed for $\log^{O(d^2)} n$ iterations.
In practice, we have observed that a significantly smaller number of iterations is needed to achieve high accuracy. We denote by $\LPTML_{t}$ for the version of the algorithm that performs a total of $t$ iterations of the inner loop.

\paragraph{Parallelization.}
Algorithm \ref{fig:approx_algo} consists of several executions of the algorithm \ExactLPTML~on independently sampled sub-problems.
Therefore, Algorithm \ref{fig:approx_algo} can trivially be parallelized by distributing a different set of sub-problems to each machine, and returning the best solution found overall.

\begin{figure*}[t]
\begin{center}
    \captionsetup[subfigure]{justification=centering}
\begin{subfigure}[t]{.31\textwidth}
    \centering%
    \begin{tikzpicture}
[scale=1]
\begin{axis}[
xmin=-120,
xmax=20,
ymin=-150,
ymax=150,
height=4.2cm,
xtick={}, ytick={}
]
\addplot [red, only marks, mark=*, mark size=1]
coordinates {(-100,110)
	(-99,55)
	(-98,0)
	(-101,-55)
	(-102,-110)};
\addplot [red, only marks, mark=*, mark size=1] coordinates {(4.56152517,-31.71936524)
(4.378171811,-12.58765728)
(4.253340744,10.89189081)
(4.228355267,-26.37788678)
(4.205989507,31.4992697)
(3.994650071,71.67528346)
(3.901197886,-32.31536163)
(3.686016516,-9.716598535)
(3.649896597,-7.263021353)
(3.590015315,-5.512210494)
(3.528089517,32.58616522)
(3.499839948,-38.77312533)
(3.497041889,50.88101338)
(3.325456078,44.78634632)
(3.261204473,-1.485568978)
(3.211854062,31.24993328)
(3.167285696,17.45096524)
(3.118622136,19.93580561)
(3.050316767,24.73783095)
(2.974650169,-20.49370227)
(2.840982171,-24.43935755)
(2.836369786,-32.73584211)
(2.820843334,67.8554654)
(2.705870723,11.61727475)
(2.694590167,-32.94298717)
(2.645660875,18.4815815)
(2.616298914,-48.99110553)
(2.594041682,11.50113901)
(2.570051067,-126.7618332)
(2.509000896,-37.85155328)
(2.471511013,31.51414759)
(2.412141155,42.96533202)
(2.236289879,-3.334447142)
(2.23527546,-62.39992263)
(2.204016012,-30.57309314)
(2.19865752,119.2866805)
(2.18599376,-24.70496655)
(2.178917366,-2.379279869)
(2.066497656,21.94797742)
(2.04835729,11.42525873)
(2.005863123,-35.94842274)
(1.98748471,-4.078722887)
(1.655609808,-59.72748773)
(1.393382959,-44.0213413)
(1.298695978,20.66914855)
(1.164867587,58.47540192)
(1.12387937,-46.43584657)};
\addplot [{green!40!black}, only marks, mark=*, mark size=1] coordinates {(-0.453118783,-17.45507498)
	(-0.602550328,25.07611542)
	(-0.686789456,46.48730021)
	(-1.051228375,28.88800473)
	(-1.079506447,40.5182992)
	(-1.156558486,-80.0758791)
	(-1.16885143,53.47810225)
	(-1.380330077,-90.38199203)
	(-1.590578855,-41.07066489)
	(-1.600287538,-30.83712857)
	(-1.681951953,21.63512507)
	(-1.913860563,-78.18986382)
	(-2.014866381,-59.19646529)
	(-2.165716788,67.00096102)
	(-2.223350873,9.738512569)
	(-2.228860422,26.00995406)
	(-2.265378298,-27.3277318)
	(-2.270747995,-5.818996157)
	(-2.314410649,45.19249841)
	(-2.397044034,-14.96956077)
	(-2.487375689,-77.81983282)
	(-2.612847679,51.40733412)
	(-2.660866731,-12.51106232)
	(-2.71294253,45.92616758)
	(-2.730622202,105.3628183)
	(-2.789674531,3.564376212)
	(-2.825080242,-14.65013931)
	(-2.83959137,-19.1366081)
	(-2.898057404,-35.00113319)
	(-2.925903338,0.064570685)
	(-2.939642589,-26.14848686)
	(-2.940629867,44.7791749)
	(-3.016712186,-44.40419871)
	(-3.104551316,3.940951587)
	(-3.149682346,29.65172704)
	(-3.219896673,47.1530208)
	(-3.241415243,-81.4666721)
	(-3.3582035,-6.352715299)
	(-3.369657062,2.978881798)
	(-3.380980682,48.01642312)
	(-3.441834788,103.638221)
	(-3.462240845,14.39004402)
	(-3.607259575,-15.25601037)
	(-3.60976478,71.49798573)
	(-3.651165343,9.342112483)
	(-3.721024238,-11.66895982)
	(-3.752093096,-0.348517726)
	(-3.999069874,-27.3401415)
	(-4.096742682,53.86172338)
	(-4.2689507,12.22135153)
	(-4.28483216,-11.27282648)
	(-4.566954305,23.44798611)
	(-4.670137661,-38.90811046)
};

\end{axis}
\end{tikzpicture}%
    \caption{Poisoned data}%
    \label{fig:noiseA}
\end{subfigure}%
\begin{subfigure}[t]{.31\textwidth}
    \begin{tikzpicture}
[scale=1]
\begin{axis}[
xmin=-120,
xmax=20,
ymin=-150,
ymax=150,
height=4.2cm,
xtick={}, ytick={}
]
\addplot [red, only marks, mark=*, mark size=1]
coordinates {(-108.7927331,0)
	(-109.8341131,2.02754695)
	(-110.875493,4.0550939)
	(-112.1918729,-2.02754695)
	(-113.3707529,-4.0550939)
};
\addplot [red, only marks, mark=*, mark size=1] coordinates {(5.024236411,-1.169318223)
	(4.844604994,-0.464037566)
	(4.735375726,0.401524)
	(4.708568924,1.161204513)
	(4.661051379,-0.972407343)
	(4.524174708,2.64227277)
	(4.290442334,-1.191289326)
	(4.079811509,-0.35819745)
	(4.042780677,-0.267747578)
	(3.978493187,-0.203204829)
	(3.957370476,1.201272362)
	(3.945772134,1.875702609)
	(3.83681067,-1.429351491)
	(3.747671256,1.651025816)
	(3.618503732,-0.054764743)
	(3.604637742,1.152012853)
	(3.537912354,0.643320934)
	(3.486995531,0.734923307)
	(3.417170227,0.911947522)
	(3.276631056,-0.755489882)
	(3.216321944,2.501457126)
	(3.123310142,-0.900944452)
	(3.107819189,-1.206790124)
	(3.01838967,0.428264909)
	(2.960129299,0.681314077)
	(2.95016646,-1.214426421)
	(2.894099734,0.423983624)
	(2.843192848,-1.80603212)
	(2.783091032,1.161752979)
	(2.738002548,-1.395378207)
	(2.731496757,1.583895052)
	(2.694638319,-4.673010333)
	(2.589903869,4.39744264)
	(2.478404268,-0.122922693)
	(2.415907284,-0.087710939)
	(2.408527775,-1.127061486)
	(2.403446287,-2.300341324)
	(2.395855892,-0.910735992)
	(2.32151587,0.809100994)
	(2.288224309,0.421186336)
	(2.201267858,-0.150360039)
	(2.18183316,-1.325220271)
	(1.763282641,-2.201823374)
	(1.49180945,-1.622824296)
	(1.467557711,0.76195762)
	(1.366248625,2.155665869)
	(1.189607318,-1.711833802)};
\addplot [{green!40!black}, only marks, mark=*, mark size=1] coordinates {(-0.524839567,-0.643472437)
	(-0.637564009,0.924418206)
	(-0.704316405,1.713730614)
	(-1.130890077,1.064941562)
	(-1.147744542,1.493686436)
	(-1.230729329,1.971442966)
	(-1.38402504,-2.951956445)
	(-1.645323222,-3.331886041)
	(-1.815073503,-1.136795018)
	(-1.817087524,-1.514049115)
	(-1.840141297,0.797567852)
	(-2.222371222,-2.882429453)
	(-2.310759057,-2.182247503)
	(-2.320475825,2.469956258)
	(-2.441812222,0.958843691)
	(-2.456035208,0.359005299)
	(-2.512805909,1.665998406)
	(-2.528099057,-0.214514326)
	(-2.549023916,-1.007422896)
	(-2.679742276,-0.551845223)
	(-2.836341244,1.895105155)
	(-2.85858499,-2.868788449)
	(-2.899641911,3.884146563)
	(-2.954310992,1.693044746)
	(-2.969546626,-0.461213932)
	(-3.092445722,0.131398911)
	(-3.154518805,-0.540069914)
	(-3.176236129,-0.705461298)
	(-3.20850726,1.650761445)
	(-3.248052155,0.002380365)
	(-3.26097168,-1.290298925)
	(-3.29607083,-0.963950632)
	(-3.404447737,-1.636938139)
	(-3.441529145,0.14528117)
	(-3.459491984,1.093095795)
	(-3.515562392,1.738272064)
	(-3.691336036,3.820570163)
	(-3.693307319,1.77010095)
	(-3.700225417,-3.003227319)
	(-3.73598311,-0.23418961)
	(-3.737033556,0.109814958)
	(-3.825549632,0.530481634)
	(-3.91793544,2.63573678)
	(-4.023596833,-0.562405042)
	(-4.041590286,0.344392212)
	(-4.145406577,-0.430170253)
	(-4.165746493,-0.012847928)
	(-4.473662337,-1.007880373)
	(-4.480589513,1.985584963)
	(-4.723813253,0.450533891)
	(-4.770811458,-0.415566999)
	(-5.040602681,0.86439805)
	(-5.233094732,-1.434327649)};

\end{axis}
\end{tikzpicture}%
    \caption{After \LPTML}%
    \label{fig:noiseA-LPTML}        
    \end{subfigure}%
\begin{subfigure}[t]{.31\textwidth}
    \pgfplotsset{
	every non boxed y axis/.style={} 
}

\begin{tikzpicture}
[scale=1]

\begin{groupplot}[
group style={
	group name=my fancy plots,
	group size=2 by 1,
	xticklabels at=edge bottom,
	horizontal sep=0pt
},
height=4.2cm,
xmin=-130, xmax=20
]

\nextgroupplot[xmin=-130,xmax=-90,
ymin=-10,ymax=10,
ytick={-10, 0, 10},
xtick={-125,-105},
axis y line=left, 
axis x discontinuity=parallel,
width=3.5cm]

\addplot [red, only marks, mark=*, mark size=1]
coordinates {(-108.7927331,0)
	(-109.8341131,2.02754695)
	(-110.875493,4.0550939)
	(-112.1918729,-2.02754695)
	(-113.3707529,-4.0550939)};

\nextgroupplot[xmin=-10,xmax=10,
ymin=-10,ymax=10,
xtick={-5, 5},
ytick=\empty,
axis y line=right,
width=3cm]
\addplot [red, only marks, mark=*, mark size=1] coordinates {(5.024236411,-1.169318223)
	(4.844604994,-0.464037566)
	(4.735375726,0.401524)
	(4.708568924,1.161204513)
	(4.661051379,-0.972407343)
	(4.524174708,2.64227277)
	(4.290442334,-1.191289326)
	(4.079811509,-0.35819745)
	(4.042780677,-0.267747578)
	(3.978493187,-0.203204829)
	(3.957370476,1.201272362)
	(3.945772134,1.875702609)
	(3.83681067,-1.429351491)
	(3.747671256,1.651025816)
	(3.618503732,-0.054764743)
	(3.604637742,1.152012853)
	(3.537912354,0.643320934)
	(3.486995531,0.734923307)
	(3.417170227,0.911947522)
	(3.276631056,-0.755489882)
	(3.216321944,2.501457126)
	(3.123310142,-0.900944452)
	(3.107819189,-1.206790124)
	(3.01838967,0.428264909)
	(2.960129299,0.681314077)
	(2.95016646,-1.214426421)
	(2.894099734,0.423983624)
	(2.843192848,-1.80603212)
	(2.783091032,1.161752979)
	(2.738002548,-1.395378207)
	(2.731496757,1.583895052)
	(2.694638319,-4.673010333)
	(2.589903869,4.39744264)
	(2.478404268,-0.122922693)
	(2.415907284,-0.087710939)
	(2.408527775,-1.127061486)
	(2.403446287,-2.300341324)
	(2.395855892,-0.910735992)
	(2.32151587,0.809100994)
	(2.288224309,0.421186336)
	(2.201267858,-0.150360039)
	(2.18183316,-1.325220271)
	(1.763282641,-2.201823374)
	(1.49180945,-1.622824296)
	(1.467557711,0.76195762)
	(1.366248625,2.155665869)
	(1.189607318,-1.711833802)};
\addplot [{green!40!black}, only marks, mark=*, mark size=1] coordinates {(-0.524839567,-0.643472437)
	(-0.637564009,0.924418206)
	(-0.704316405,1.713730614)
	(-1.130890077,1.064941562)
	(-1.147744542,1.493686436)
	(-1.230729329,1.971442966)
	(-1.38402504,-2.951956445)
	(-1.645323222,-3.331886041)
	(-1.815073503,-1.136795018)
	(-1.817087524,-1.514049115)
	(-1.840141297,0.797567852)
	(-2.222371222,-2.882429453)
	(-2.310759057,-2.182247503)
	(-2.320475825,2.469956258)
	(-2.441812222,0.958843691)
	(-2.456035208,0.359005299)
	(-2.512805909,1.665998406)
	(-2.528099057,-0.214514326)
	(-2.549023916,-1.007422896)
	(-2.679742276,-0.551845223)
	(-2.836341244,1.895105155)
	(-2.85858499,-2.868788449)
	(-2.899641911,3.884146563)
	(-2.954310992,1.693044746)
	(-2.969546626,-0.461213932)
	(-3.092445722,0.131398911)
	(-3.154518805,-0.540069914)
	(-3.176236129,-0.705461298)
	(-3.20850726,1.650761445)
	(-3.248052155,0.002380365)
	(-3.26097168,-1.290298925)
	(-3.29607083,-0.963950632)
	(-3.404447737,-1.636938139)
	(-3.441529145,0.14528117)
	(-3.459491984,1.093095795)
	(-3.515562392,1.738272064)
	(-3.691336036,3.820570163)
	(-3.693307319,1.77010095)
	(-3.700225417,-3.003227319)
	(-3.73598311,-0.23418961)
	(-3.737033556,0.109814958)
	(-3.825549632,0.530481634)
	(-3.91793544,2.63573678)
	(-4.023596833,-0.562405042)
	(-4.041590286,0.344392212)
	(-4.145406577,-0.430170253)
	(-4.165746493,-0.012847928)
	(-4.473662337,-1.007880373)
	(-4.480589513,1.985584963)
	(-4.723813253,0.450533891)
	(-4.770811458,-0.415566999)
	(-5.040602681,0.86439805)
	(-5.233094732,-1.434327649)
};

\end{groupplot}

\end{tikzpicture}%
    \caption{After \LPTML\\(magnified view)}%
    \label{fig:noiseA-LPTML-magnif}        
    \end{subfigure}%

\caption{Data poisoning
 scenario. On the left, the data set before learning. The center image shows the correct transformation recovered by \LPTML. On the right, a magnified view of the recovered data.}
    \label{fig:Synthetic-noise}
\end{center}
\end{figure*}
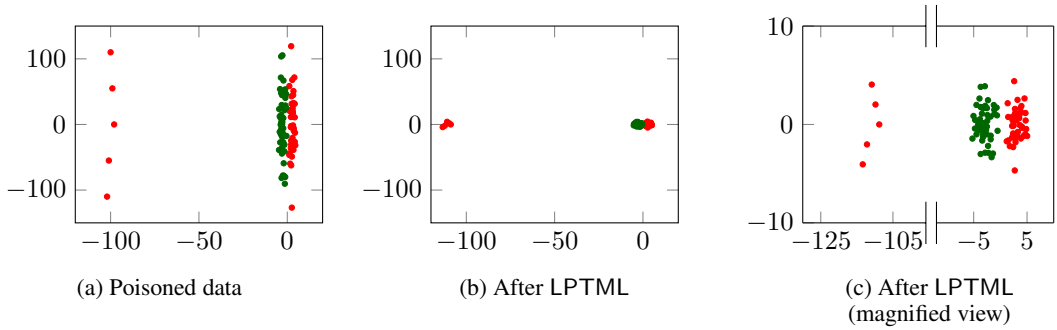

\section{Experimental Evaluation}
\label{sec:experiments}

We have implemented Algorithm \ref{fig:approx_algo}, incorporating the practical improvements described in Section \ref{sec:practical}, and performed experiments on synthetic and real-world data sets.
Our $\LPTML$ implementation and documentation can be found in our repository.
We now describe the experimental setting and discuss the main findings.

\subsection{Experimental Setting}

\begin{table}[t]
\centering
\begin{tabular}{lccc}
    \toprule
	Data set & ITML & LMNN & LPTML$_{t=2000}$ \\
	\midrule
	Iris & $ 0.96\pm 0.01 $ & $ 0.96 \pm 0.02 $ & $ 0.94 \pm 0.04 $\\
	Soybean & $ 0.95 \pm 0.04 $ & $ 0.96 \pm 0.04 $ & $ 0.90\pm 0.05 $\\
	Synthetic & $0.97 \pm 0.02$ & $1.00 \pm 0.00$ & $1.00 \pm 0.00$ \\ 
	\bottomrule
\end{tabular}
\caption{Average accuracy and standard deviation over 50 executions of ITML, LMNN and LPTML.}
\label{table:accuracies}
\end{table}

\paragraph{Classification task.}
Each data set used in the experiments consists of a set of labeled points in $\mathbb{R}^d$.
The label of each point indicates its class, and there is a constant number of classes.
The set of similarity constraints ${\cal S}$ (respt.~dissimilarity constraints ${\cal D}$) is formed by uniformly sampling pairs of points in the same class (resp.~from different classes).
We use various algorithms to learn a Mahalanobis metric for a labeled input point set in $\mathbb{R}^d$, given these constraints.
The values $u$ and $\ell$ are chosen as the $90$th and $10$th percentiles of all pairwise distances.
We used 2-fold cross-validation: At the training phase we learn a Mahalanobis metric, and in the testing phase we use $k$-NN classification, with $k=4$, to evaluate the performance.

\paragraph{Data sets.}
We have tested our algorithm on the following synthetic and real-world data sets:
\begin{description}
\item{\emph{1. Real-world:}}
We have tested the performance of our implementation on the Iris, Wine, Ionosphere and Soybean data sets from the UCI Machine Learning Repository\footnote{\url{https://archive.ics.uci.edu/ml/datasets.php}}.

\item{\emph{2. Synthetic:}}
Next, we consider a synthetic data set that is constructed by first sampling a set of $100$ points from a mixture of two Gaussians in $\mathbb{R}^2$, with identity covariance matrices, and with means $(-3,0)$ and $(3,0)$ respectively;
we then apply a linear transformation that stretches the $y$ axis by a factor of $40$. This linear transformation reduces the accuracy of $k$-NN on the underlying Euclidean metric with $k=4$ from 1 to 0.68.

\item{\emph{3. Data poisoning:}}
We modify the above synthetic data set by introducing a small fraction of points in an adversarial manner, before applying the linear transformation.
Figure \ref{fig:noiseA-LPTML} depicts the noise added as five points labeled as one of the classes, and sampled from a Gaussian with identity covariance matrix and mean $(-100, 0)$ (Figure \ref{fig:noiseA}).
\end{description}

\begin{figure}[t]
    \centering
    \begin{tikzpicture}
\begin{axis}[height=4cm, width=8.5cm,
x tick label style={
	/pgf/number format/1000 sep=},
ylabel=Accuracy of $k-$NN,
enlargelimits=0.15,
legend style={at={(0.5,-0.3)},
	anchor=north,legend columns=-1},
ybar,
bar width=7pt,
xticklabels={Euclidean, ITML, LMNN, \LPTML$_{t=100}$},
xtick={1,2,3,4},
]
\addplot
coordinates {(1,0.65) (2,0.99)
	(3,0.99) (4,0.99)};
\addplot
coordinates {(1,0.64) (2,0.64)
	(3,0.74) (4,0.98)};

\legend{Without noise,With noise,Noise B}
\end{axis}
\end{tikzpicture}
    \caption{Accuracy in a data poisoning scenario. See Figure \ref{fig:noiseA}.}
    \label{fig:synthetic-noise-accuracy}
\end{figure}
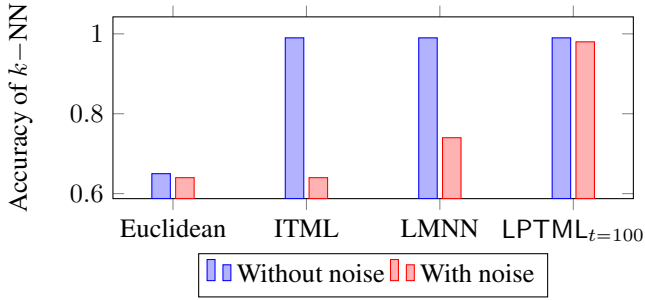

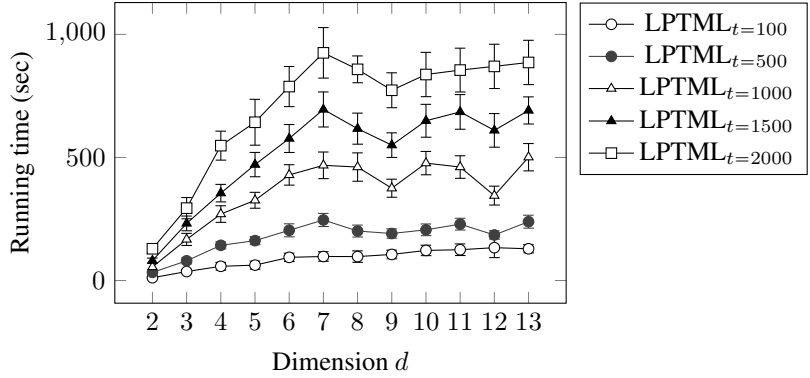
\begin{figure}[t]
    \centering
        \begin{tikzpicture}
    \begin{axis}[
    	xlabel={Dimension $d$},
    	ylabel={Running time (sec)},
    	xtick={2,3,4,5,6,7,8,9,10,11,12,13},
    	height=4cm, width=6cm,scale only axis,
    	legend style={legend pos=outer north east},
        legend columns={1}
    ]
    
    \addplot [color=black, mark=*, mark options={fill=white}]
    plot[error bars/.cd, y dir=both, y explicit] 
    coordinates{
    	(2, 11.36) +- (0, 1.7)
    	(3, 36.39) +- (0, 7.6)   
    	(4, 57.74) +- (0, 15.8)
    	(5, 62.48) +- (0, 16.6)
    	(6, 94.12) +- (0, 16.9)
    	(7, 97.61) +- (0, 19.4)
    	(8, 97.23) +- (0, 23.3)
    	(9, 105.87) +- (0, 17.5)
    	(10, 122.27) +- (0, 21.2)
        (11, 125.11) +- (0, 23.6)
        (12, 133.45) +- (0, 40.4)
        (13, 128.93) +- (0, 16.6)
    };

    \addplot[color=darkgray, mark=*, mark options={fill=darkgray}] 
    plot[error bars/.cd, y dir=both, y explicit] 
    coordinates{
    	(2, 33) +- (0, 3.1)
    	(3, 80) +- (0, 12.0)
    	(4, 143) +- (0, 14.5)
    	(5, 162) +- (0, 16.3)
    	(6, 204) +- (0, 26.4)
    	(7, 246) +- (0, 26.6)
    	(8, 201) +- (0, 24.2)    
    	(9, 191) +- (0, 18.9)   
    	(10, 206) +- (0, 23.8)
    	(11, 229) +- (0, 23.9)  
    	(12, 185) +- (0, 17.2)   
    	(13, 239) +- (0, 26.3)
    };

    \addplot[color=black, mark=triangle*, mark options={fill=white}] 
    plot[error bars/.cd, y dir=both, y explicit]    
    coordinates{
    	(2, 55) +- (0, 15)
    	(3, 167)  +- (0, 24.6)
    	(4, 270) +- (0, 33.7)
    	(5, 326)  +- (0, 32.3)
    	(6, 429)  +- (0, 41.5)
    	(7, 468)  +- (0, 54.1)
    	(8, 461)  +- (0, 57.6)
    	(9, 375)  +- (0, 36.7)
    	(10, 477)  +- (0, 47.4)
    	(11, 461)  +- (0, 45.8)
    	(12, 345)  +- (0, 38.4)
    	(13, 501)  +- (0, 55.5)
    };

    \addplot[color=black, mark=triangle*, mark options={fill=black}]
    plot[error bars/.cd, y dir=both, y explicit]        
    coordinates{
    	(2, 81) +- (0, 10.1)    
    	(3, 232) +- (0, 29.9)         
    	(4, 355) +- (0, 35.6)    
    	(5, 471) +- (0, 49.4)      
    	(6, 577) +- (0, 56.8)      
    	(7, 695) +- (0, 71.2)    
    	(8, 617) +- (0, 63.1)     
    	(9, 550) +- (0, 50.0)     
    	(10, 649) +- (0, 66.8)    
    	(11, 685) +- (0, 70.4)     
    	(12, 610) +- (0, 68.3)     
    	(13, 691) +- (0, 55.1)    
    };
 
    \addplot[color=black, mark=square*, mark options={fill=white}]
    plot[error bars/.cd, y dir=both, y explicit] 
    coordinates{
    	(2, 129) +- (0, 18.1)
    	(3, 294) +- (0, 43.1)
    	(4, 548) +- (0, 58.9)
    	(5, 643) +- (0, 93.2)
    	(6, 788) +- (0, 81.5)
    	(7, 925) +- (0, 102.6)
    	(8, 858) +- (0, 54.5)
    	(9, 773) +- (0, 70.9)
    	(10, 837) +- (0, 90)
    	(11, 855) +- (0, 89)
    	(12, 870) +- (0, 90)
    	(13, 886) +- (0, 90)
    };
    
    \legend{\text{LPTML}$_{t=100}$,\text{LPTML}$_{t=500}$,\text{LPTML}$_{t=1000}$,\text{LPTML}$_{t=1500}$,\text{LPTML}$_{t=2000}$}
    \end{axis}
    \end{tikzpicture}
    \caption{Average running time of $10$ executions of \LPTML~at different levels of dimensionality reduction using PCA on the Wine data set. Each curve corresponds to \LPTML~limited to $t$ iterations.}
    \label{fig:wine_time}   
\end{figure}

\begin{figure}[t]
    \centering
    \begin{tikzpicture}
\begin{axis}[
	xlabel={Number of machines},
	ylabel={Running time (sec)},
	xtick={1,2,4,8},
	ytick={1000, 2000, 3000, 4000},
	height=4cm, scale only axis,
	legend pos=outer north east,
    legend columns={1}
]
\addplot[color=black, mark=*, mark options={fill=white}] 
plot[error bars/.cd, y dir=both, y explicit]     
coordinates {
	(1, 4722) +- (0,150)
	(2, 4452) +- (0,150)
	(4, 4000) +- (0,150)
	(8, 2000) +- (0,150)
};
\addplot[color=black, mark=square*, mark options={fill=white}] 
plot[error bars/.cd, y dir=both, y explicit]     
coordinates {
	(1, 2492)  +- (0, 100)
	(2, 2222)  +- (0, 100)
	(4, 2000)   +- (0, 100)
	(8, 1000)  +- (0, 100) 
};
\addplot[color=black, mark=triangle*, mark options={fill=white}] 
plot[error bars/.cd, y dir=both, y explicit]     
coordinates {
	(1, 1183)  +- (0,50)
	(2, 1024)   +- (0,50)
	(4, 1000)  +- (0,50)
	(8, 600)  +- (0,50)
};
\legend{\text{Ionosphere}, \text{Soybean}, \text{Wine}}
\end{axis}
\end{tikzpicture}
    \caption{Running time for parallel \LPTML~on an increasing number of machines (average over 10 executions).}
    \label{fig:parallel_running_time}
\end{figure}
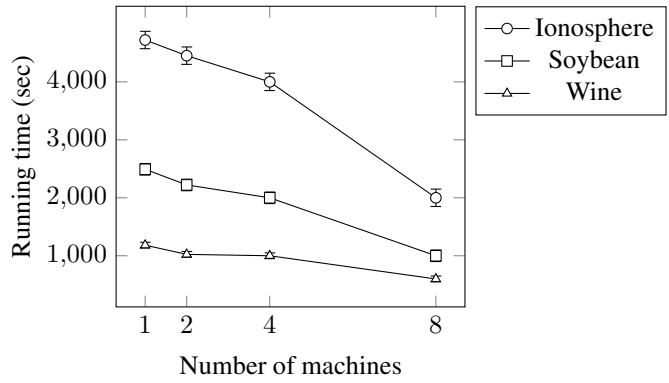

\paragraph{Algorithms.}    
We compare the performance of our algorithm against \ITML ~and \LMNN.
We used the implementations provided by the authors of these works, with minor modifications.

\subsection{Results}

\paragraph{Accuracy.}
Algorithm \ref{fig:approx_algo} minimizes the number of violated pairwise distance constraints.
It is interesting to examine the effect of this objective function on the accuracy of $k$-NN classification.
Figure \ref{fig:wine_viol_acc} depicts this relationship for the Wine data set.
We observe that, in general, as the number of iterations of the main loop of \LPTML~increases, the number of violated pairwise distance constraints decreases, and the accuracy of $k$-NN increases.
This phenomenon remains consistent when we first perform PCA to $d=4,8,12$ dimensions.

\paragraph{Comparison to \ITML~and \LMNN.}
We compared the accuracy obtained by $\LPTML_t$, for $t=2000$ iterations, against \ITML~and \LMNN.

Table \ref{table:accuracies} summarizes the findings on the real-world and data sets and the synthetic data set without adversarial noise.
We observe that \LPTML~achieves accuracy that is comparable to \ITML~and \LMNN.

We observe that \LPTML~outperforms \ITML~and \LMNN~on the poisoned
 data set.
This is due to the fact that the introduction of adversarial noise causes the relaxations used in \ITML~and \LMNN~to be biased towards contracting the $x$-axis.
In contrast, the noise does not ``fool'' \LPTML~because it only changes the optimal accuracy by a small amount.
The results are summarized in Figure \ref{fig:synthetic-noise-accuracy}.

\paragraph{The effect of dimension.}
The running time of \LPTML~grows with the dimension $d$.
This is caused mostly by the fact that the combinatorial dimension of the underlying LP-type problem is $O(d^2)$, and thus performing each basic operation requires solving an SDP with $O(d^2)$ constraints.
Figure \ref{fig:wine_time} depicts the effect of dimensionality in the running time, for $t=100,\ldots,2000$ iterations of the main loop.
The data set used is Wine after performing PCA to $d$ dimensions, for $d=2,\ldots,13$.

\paragraph{Parallel implementation.}

We implemented a massively parallel version of \LPTML~in the MapReduce model. The program maps different sub-problems of the main loop of \LPTML~to different machines. In the reduce step, we keep the result with the minimum number of constraint violations.
The implementation uses the mrjob \cite{mrjob-docs} package. For these experiments, we used Amazon cloud computing instances of type \emph{m4.xlarge}, AMI 5.20.0 and configured with Hadoop.
As expected, the training time decreases as the number of available processors increases (Figure \ref{fig:parallel_running_time}). All technical details about this implementation can be found in the parallel section of the documentation of our code.

\section{Conclusions}
We have shown that the problem of learning a Mahalanobis metric space can be cast as an LP-type problem.
This formulation allows us to obtain an efficient approximation scheme using tools from the theory of linear programming in low dimensions.
Specifically, we present a near-linear time $(1+\eps)$-approximation algorithm that minimizes the number of violated constraints.
Experimental evaluation demonstrates that when compared to prior work, our method is significantly more robust against small adversarial modifications of the input labelling.
Our approach also leads to a fully parallelizable algorithm.

It is an interesting research direction to extend our approximation algorithm to other classes of metric learning problems.
One such case is when the input is specified as a set of ordered triples $(x,y,z)$, and the goal is to find a mapping $f$ with $\|f(x)-f(y)\|_2\leq \|f(x)-f(z)\|_2 - m$, for some margin $m>0$ (see \cite{weinberger2009distance}).
Another important direction is to obtain geometric approximation algorithms for non-linear metric learning primitives, such as mappings computed by small depth neural networks.

\bibliographystyle{abbrvnat}

\bibliography{bibfile}

\end{document}